\documentclass[12pt]{article}

\usepackage[utf8]{inputenc} % allow utf-8 input
\usepackage[T1]{fontenc}    % use 8-bit T1 fonts
\usepackage{hyperref}       % hyperlinks
\usepackage{url}            % simple URL typesetting
\usepackage{booktabs}       % professional-quality tables
\usepackage{amsfonts}       % blackboard math symbols
\usepackage{nicefrac}       % compact symbols for 1/2, etc.
\usepackage{microtype}      % microtypography
\usepackage{xcolor}         % colors

\usepackage{amsthm} % for theorem
\usepackage{amsmath}
\usepackage{algorithm} % for algorithms
\usepackage{algorithmicx} % for algorithms
\usepackage{algpseudocode} % for keywords in algorithms
\usepackage{tikz}
\usetikzlibrary{arrows.meta, positioning}

\usepackage{dsfont}

\usepackage{enumitem}
\newlist{balditemize}{itemize}{1}
\setlist[balditemize]{label=$\bullet\,\,$, wide=\parindent, labelsep*=0pt, leftmargin=*, topsep=0pt, itemsep=0pt}

\newtheorem{theorem}{Theorem}

\newcommand{\ensclass}[1]{\psi_{#1}}

\title{A statistical approach to bias in zero-shot learning: the lens of handwriting recognition
}

\date{}

\newcommand{\monek}[1]{M_{[1]}^{(#1)}}
\newcommand{\mtwoki}[2]{M_{[2],#2}^{(#1)}}
\newcommand{\dtrain}{\mathcal{D}_\mathrm{train}}
\newcommand{\dtest}{\mathcal{D}_\mathrm{test}}
\def\P{\mathbb{P}}
\def\ind{\mathds{1}}

\usepackage[bottom]{footmisc}

\begin{document}

\maketitle
\footnotetext[1]{Lead author}
{
\author{%
\begin{tabular}{cc}
\begin{minipage}[t]{0.45\textwidth}
\centering
Clarence Chew\footnotemark[1] \\[2pt]
Department of Mathematics \\
National University of Singapore \\
\texttt{clarence\_chew\_@nus.edu.sg} \\[1em]

Sukalpa Chanda \\ [2pt]
Dept of Comp Sc \& Comm \\
Østfold University College \\
\texttt{sukalpa@ieee.org}  \\[1em]

Soumendu Sundar Mukherjee \\[2pt]
Statistics and Mathematics Unit \\
Indian Statistical Institute \\
\texttt{ ssmukherjee@isical.ac.in}

\end{minipage}
&
\begin{minipage}[t]{0.45\textwidth}
\centering
Gim Siang Chia\footnotemark[1] \\[2pt]
Department of Mathematics \\
National University of Singapore \\
\texttt{e0310077@u.nus.edu} \\[1em]

Subhroshekhar Ghosh \\ [2pt]
Department of Mathematics \\
National University of Singapore \\
\texttt{subhrowork@gmail.com}

\end{minipage}
\end{tabular}
}
}

\newpage

\begin{abstract}
Generalized zero-shot learning (GZSL) has emerged as an important paradigm for visual recognition systems that must generalize to classes that were not observed during training. Traditional GZSL techniques are limited by their applicability to a relatively small number of such unseen classes, scalability beyond which is challenging due to its well-known {\it misclassification bias} towards classes observed during training. In this work, we investigate the GZSL paradigm through the lens of zero-shot handwritten word recognition over extremely large vocabularies, which brings into sharp focus the central problem of inherent bias of GZSL methods towards the training data. We propose a statistical approach to rectifying this bias, which views any classical GZSL feature learner as a black box mechanism whose intrinsic bias is reflected in its ability to identify the training status (seen vs. unseen) of a typical data point, which may be viewed as an out of distribution inferential problem. Our method leverages a simple two-stage hierarchical architecture, combining a classical GZSL blackbox in the first stage and an ensemble of light-weight Monte Carlo based bias-correctors in the second, wherein the dual tasks of bias estimation and threshold calibration are performed. Once debiased, the classification of test data is undertaken only restricted to its predicted training status, using well-founded approaches with solid statistical underpinnings (such as nearest neighbour, logistic regression and random forests). In doing so, we can achieve relative accuracy improvements of over 20\% in the classification of unseen words compared to established techniques. A key outcome of our investigations is that word recognition over large scale vocabularies can be well-represented by a much lower dimensional approximation (with as low as 15 dimensions), which renders practicable classical statistical methodologies as well as very large scale Monte Carlo techniques. Our approach is underpinned by mathematical analysis that captures the essence of the statistical approach to bias correction. It may be recognized that our statistical approach to bias rectification can be combined in a turn-key fashion with potentially any classical GZSL learner as a blackbox, thereby suggesting a wide scope of applicability of this method for a wide variety of GZSL implementations in different domains.  
\end{abstract}
% Abstract should be one paragraph

\tableofcontents

\newpage

\section{Introduction}

{\it Generalized Zero-shot Learning (ZSL)} has emerged as an important paradigm for visual recognition systems that must generalize to classes that were not observed during training \cite{xian2016generalized}. In existing approaches, semantic relationships between classes are often captured through attribute annotations or language embeddings, with the ultimate objective of transferring knowledge from seen classes to unseen ones. Most existing work on zero-shot recognition has focused on object classification benchmarks such as Animals with Attributes (AwA), CUB-200, and SUN, where the number of unseen classes typically ranges from tens to a few hundred \cite{chao2016empirical,xian2019zero,verma2018generalized,verma2020meta}.
%In these settings, semantic relationships between classes are often captured through attribute annotations or language embeddings, with the go models to transfer knowledge from seen classes to unseen ones. 

%In this work we study a substantially more challenging setting: 
A well-known issue with existing GZSL frameworks is a persistent {\it bias towards seen classes}, i.e. those classes whose data have been used for training the learning model \cite{chao2016empirical}. 
This problem of inherent bias towards seen classes is brought into sharp focus in the substantially more challenging setting of 
\emph{zero-shot handwritten word recognition over extremely large vocabularies}. 
Unlike object recognition datasets, where visual classes correspond to semantically meaningful object categories, handwritten word recognition operates over a combinatorial space of word forms. 
Even moderately sized vocabularies can contain thousands of distinct classes, and new word classes can be constructed through arbitrary combinations of characters.

This fundamental difference introduces several challenges that are not present in conventional zero-shot object recognition. 
First, the {\it number of potential classes is significantly larger}, leading to a much denser and more complex class manifold. 
Second, visual {\it differences between classes can be extremely subtle}; words may differ by only a single character or by a small modification in stroke structure. 
Third, handwritten text exhibits {\it substantial intra-class variation} due to differences in writing style, stroke connectivity, and character spacing. 
Together, these factors make large-vocabulary zero-shot word recognition a significantly more difficult problem than standard object-level zero-shot classification \cite{rai2021phoscnet,bhatt2022phoscctc}.

Most modern handwriting recognition systems rely on Connectionist Temporal Classification (CTC) based sequence models that predict character sequences \cite{graves2006connectionist}. 
While effective for transcription tasks, these models are known to exhibit several limitations which render them a structurally misaligned for zero-shot word recognition problems; these include segmentation difficulties, accumulation of errors across characters, and focus on local (as opposed to global) structures in complex words. 

To address these difficulties, we leverage a {\it structured word embedding paradigm}, namely the so-called {\it Pho(SC)Net} \cite{rai2021phoscnet}, the proposed  new representation called Pyramidal Histogram of Shapes, captures both the positional information of characters and their visual shape appearance, making it more discriminative than traditional PHOC embeddings \cite{rai2021phoscnet}. Handwritten word images are mapped directly into this embedding space, and recognition is performed by matching the predicted embedding against embeddings of candidate. Combined with a classical nearest neighbor classifier, this approach is known to prominently capture the well-known phenomenon of bias towards seen classes in GZSL, with accuracy rates for correct identification of unseen classes at a lowly 77\% (compared to a seen class accuracy of around 93\%) \cite{rai2021phoscnet}. 

\vskip5pt
\textbf{Our Contributions.}
In this paper, we investigate the problem of {\it mitigating the inherent bias towards seen classes in GZSL} through the prism of  handwritten word recognition over large vocabularies.  

We propose a {\it statistical approach to rectifying  bias} towards seen classes, which views any classical GZSL feature learner as a black box mechanism whose intrinsic bias is reflected in its ability (or lack thereof) to identify the training status (seen vs. unseen) of the class of a typical test data point. This latter issue may in fact be viewed as an out of distribution inferential problem; although in view of complete class identification (as opposed to mere training status), our problem is much more challenging and goes far beyond classical out of distribution learning. 

Our method leverages a simple {\it two-stage hierarchical architecture}, combining a classical GZSL feature learning blackbox in the first stage and an ensemble of light
weight Monte Carlo based bias-correctors in the second. While the first stage may be envisaged to generate a structured embedding of a give data point, it is the second stage where the dual tasks of bias estimation and threshold calibration are performed. To wit, using a  Monte Carlo aggregation approach, the classifiers in the second phase certify an input data point to from a seen class only if the proportion of seen certificates from the individual second stage classifiers crosses a suitable threshold that is calibrated to capture the inherent bias of the GZSL feature generator in the first stage.   The overall output of this two stage architecture is thus two-fold: (i) the structured word embedding (from the first stage) and (ii) a debiased training status predictor (seen/unseen) of the class of the input data point (from the second stage).  

Subsequent to processing via this two-stage architecture, the
classification of test data is undertaken only restricted to its predicted training
status, using well-founded approaches with solid statistical underpinnings (such 
as {\it nearest neighbour classifiers, logistic regression} and {\it random forests}). This design allows the system to dynamically restrict recognition to either seen or unseen candidate vocabularies, mitigating the well-known bias toward seen classes in generalized zero-shot learning. Our approach is well-motivated by an underpinning mathematical analysis that unveils the potential of such a method under a simplified analytical setup.

% In this work we explore an alternative formulation based on \textbf{structured word embeddings}. 
% Instead of predicting character sequences, we represent each word by a high-dimensional feature vector that encodes multiple aspects of its compositional structure, including letter shape descriptors, segment-level character presence, and bigram statistics across different parts of the word. 
% Handwritten word images are mapped directly into this embedding space, and recognition is performed by matching the predicted embedding against embeddings of candidate words.

A key observation augmenting our approach is a highly revealing geometric property of this embedding space. 
Although the structured word representation has dimensionality $d = 769$, we find that the embeddings produced by trained models lie on a remarkably low-dimensional manifold.  Empirically, applying principal component analysis (PCA) reveals that the essential structure of the embedding space can be captured using only \textit{15 dimensions}. 
Despite this dramatic reduction in dimensionality, the resulting representations remain highly informative for distinguishing between thousands of word classes. This phenomenon suggests that the embedding space possesses a strong underlying geometric structure that can be exploited for recognition. 
In particular, we can leverage this property to construct highly lightweight classifiers that operate in the reduced-dimensional space to determine whether a predicted embedding corresponds to a word class observed during training.

% Building on this insight, we introduce a two-level recognition architecture consisting of embedding models ($\monek{k}$) and seen/unseen discriminators ($\mtwoki{k}{i}$). 
% Each embedding model is trained on a subset of the available vocabulary classes and produces structured word embeddings from handwritten images. 
% The $\mtwoki{k}{i}$ classifiers operate on the low-dimensional embedding manifold to estimate whether a given prediction corresponds to a class observed during training. 
% This design allows the system to dynamically restrict recognition to either seen or unseen candidate vocabularies, mitigating the well-known bias toward seen classes in generalized zero-shot learning.

By combining structured word embeddings with a principled statistical approach to bias correction, the proposed framework enables recognition across vocabularies containing thousands of word classes, including a large number of unseen words. Augmented with dimensionality reduction leading to a parsimonious representation of the embedding manifold geometry, we are able to provide significant advances (relative improvements of over 20\%) in accurately identifying unseen classes in the challenging problem of zero short handwritten word recognition, at a very manageable computational cost. The proposed method leverages the structured word embedding Pho(SC)Net as a black box, and can thus be potentially augmented in a turn-key fashion to any classical GZSL feature learning mechanism to achieve superior outcomes, especially for correct identification of unseen data categories.

 \vskip5pt
{\bf Related Work.} 
{\it Zero-shot learning} (ZSL) has been studied extensively in object recognition, where models learns a mapping from seen to unseen categories through attributes or semantic embeddings \cite{lampert2009learning,palatucci2009zero,frome2013devise,socher2013zero,akata2015evaluation}.  In generalized zero-shot learning, inference during test phase involves recognition over both seen and unseen classes \cite{chao2016empirical,xian2019zero,verma2018generalized,verma2020meta}. While these methods have led to strong progress on benchmarks such as AwA, CUB, and SUN, the underlying setting typically involves tens or hundreds of semantically separated object classes. But Handwritten word recognition is fundamentally different: the label space is combinatorial, vocabularies can contain thousands of classes, and neighboring classes may differ by only a single character or minor stroke variation. This makes direct transfer of standard ZSL assumptions and methods non-trivial.

% \paragraph{Sequence-based handwritten text recognition:}
% Sequence transcription model based approaches  are very common in modern handwritten text recognition (HTR) and is dominated by , especially CNN--RNN architectures trained with Connectionist Temporal Classification (CTC) \cite{graves2006connectionist}. More recently, attention- or Transformer-based systems \cite{bluche2017joint,li2023trocr} are also evident. These methods are highly effective when enough labeled data are available and the objective is to achieve full transcription. However, in those methods  prediction is done as a sequence of local decisions, thus them vulnerable to error in challenging circumstances like irregular spacing, touching characters, ligatures, and degraded document quality. This weakness is especially relevant in retrieval-oriented scenarios, where even a small character-level mistake can corrupt the final word match.

{\it Embedding-based word spotting and recognition.}
In many real-world applications such as searching large digital archives, 
%the primary objective is not to fully transcribe documents but to retrieve instances of specific query words or phrases. Here, the idea is 
it is customary to represent both word images and text strings in a shared embedding space, where retrieval can be performed via similarity-based matching. 
%This formulation directly supports efficient search over large vocabularies and naturally accommodates open-vocabulary scenarios. 
The PHOC representation introduced by Almazán et al.~\cite{almazan2014word} is a foundational approach in this direction, enabling segmentation-free word spotting and lexicon-based recognition.
Subsequent work showed that deep architectures can predict such embeddings effectively from word images \cite{sudholt2017evaluating,krishnan2016deep}, and PHOC-style methods have remained attractive because they support lexicon search through nearest-neighbor matching rather than explicit sequence decoding. 

{\it Zero-shot handwritten word recognition.}
Closer to our setting, recent work has explored zero-shot handwritten word recognition using structured embeddings that encode both character occurrence and shape information. Pho(SC)Net introduced a hybrid PHO(SC) representation for zero-shot word image recognition in historical documents, showing that shape-aware structured embeddings can improve generalization to unseen words \cite{rai2021phoscnet}. Pho(SC)-CTC later combined these representations with a CTC-based decoding framework to improve recognition further \cite{bhatt2022phoscctc}. These studies established structured word embeddings as a viable basis for zero-shot handwritten recognition, but they do not explicitly address the large-scale generalized setting in which test data contain both seen and unseen words and the model must manage the strong bias toward seen classes.

% \textbf{Position of the present work.}
% While prior approaches have demonstrated that structured embeddings can scale to large vocabularies, they typically operate in the original high-dimensional embedding space and do not explicitly address the geometric structure of the learned representations or the resulting bias toward seen classes in the generalized zero-shot setting.
% In contrast, we exploit a key empirical observation: although structured word embeddings are high-dimensional, the predictions of trained models lie close to a low-dimensional manifold. Leveraging this property, we construct lightweight models in the reduced space to perform seen/unseen discrimination, enabling explicit bias correction.

% \begin{equation}
% \dtrain =
% \{(x_i, w_i) \mid x_i \in \mathcal{X},\ w_i \in \mathcal{W}_s\}
% \end{equation}
%(where $w_i=w(x_i)$) while the test dataset contains samples from both seen and unseen classes

% \begin{equation}
% \dtest =
% \{(x_j, w_j) \mid x_j \in \mathcal{X},\ w_j \in \mathcal{W}_s \cup \mathcal{W}_u\},
% \end{equation}
% where $w_j=w(x_j)$.

%Our goal is to learn a function $f_\theta : \mathcal{X} \rightarrow \mathbb{R}^d$, that maps a word image to a structured word embedding.
% \begin{equation}
% f_\theta : \mathcal{X} \rightarrow \mathbb{R}^d
% \end{equation}

% \begin{equation}
% \hat{w} =
% \arg\min_{w \in \mathcal{W}}
% \| f_\theta(x) - e(w) \|
% \end{equation}

%To address bias toward seen classes we additionally estimate whether the input belongs to a seen or unseen class.

\section{Problem Formulation and Methodology}

\textbf{Problem Formulation.} Let $\mathcal{X}$ denote a space of handwritten word images and $\mathcal{W}$ denote the corresponding vocabulary of word classes. Let $\mathcal{X} \xrightarrow{w} \mathcal{W}:x\mapsto w(x)$ denote a map taking the word image to the word class. Each word class $w \in \mathcal{W}$ is associated with an embedding vector $e(w) \in \mathbb{R}^d$. 

We partition the vocabulary into two disjoint sets $\mathcal{W} = \mathcal{W}_s \cup \mathcal{W}_u$,
where $\mathcal{W}_s$ represents the set of word classes available during training and $\mathcal{W}_u$ represents unseen word classes that appear only during testing. The data points (in the form of handwritten word images) are envisaged to be generated from a distribution $P$ on the space $\mathcal{X}$. 
The training and test datasets are defined respectively as 
\begin{center}
$\dtrain =
\{(x_i, w_i \mid x_i \in \mathcal{X},\ w_i=w(x_i) \in \mathcal{W}_s\};$ \\%\quad 
$\dtest =
\{(x_j, w_j) \mid x_j \in \mathcal{X},\ w_j \in \mathcal{W}_s \cup \mathcal{W}_u\}.$
\end{center}

We learn a function $f_\theta : \mathcal{X} \rightarrow \mathbb{R}^d$, that maps a word image to a structured word embedding; suitably parametrized by $\theta$ which may symbolize, for instance, the weights of a neural network which may drive the predictor $f_\theta$. Classically, recognition of a word is performed by nearest neighbor matching; in other words, $\hat{w}(x) =\arg\min_{w \in \mathcal{W}} \| f_\theta(x) - e(w) \|$.

\textbf{Methodology.} Our framework consists of two phases: a first phase consisting of $L$ copies of embedding models $(\monek{k})_{k=1}^L$, and a second phase comprising of seen/unseen classifiers  $\bigl(\bigl[\mtwoki{k}{i}\bigr]_{i=1}^{n_k}\bigr)_{k=1}^L$ ($n_k$ copies for each $\monek{k}$ from the first phase).

%\subsection{Embedding Models (\texorpdfstring{$\monek{k}$}{M1})}
{\bf Phase I: Embedding Models (\texorpdfstring{$\monek{k}$}{M1}).} Each embedding model learns a mapping $\monek{k} : \mathcal{X} \rightarrow \mathbb{R}^d$ that converts a word image into a structured embedding vector. The embedding encodes compositional properties of words including local letter shape descriptors, segment-level character presence and bigram statistics across different parts of a word. For each $1\le k \le L$, the model $\monek{k}$ is trained on a subset of the seen vocabulary $\mathcal{W}_s^{(k)} \subset \mathcal{W}_s$, while the remaining classes (i.e. $\mathcal{W}_s \cup \mathcal{W}_u \setminus \mathcal{W}_s^{(k)}$) are treated as unseen for the $k$-th model $\monek{k}$ from Phase I.
% \begin{equation}
% \mathcal{W}_u^{(k)} = \mathcal{W} \setminus \mathcal{W}_s^{(k)}
% \end{equation}
Training $1\le k \le L$ such models produces a diverse ensemble where each model has partial knowledge of the vocabulary.

%\subsection{Seen/Unseen Classifier (\texorpdfstring{$\mtwoki{k}{i}$}{M2deff})}
{\bf Phase II: Seen/Unseen Classifier (\texorpdfstring{$\mtwoki{k}{i}$}{M2deff}).}
Embedding models tend to be biased toward classes observed during training. In other words, the mis-identification error for the training status of a data point $x$ (generated from the distribution $P$), captured by the quantity $\P[\hat{w}(x)\in \mathcal{W}_s^{(k)} | w(x) \in \mathcal{W}_u^{(k)} ]$ is high (for comparison, in a perfect classification mechanism, this probability should be 0).
We define the indicator random variables $Z_k(x):=\ind [\{w(x)\in \mathcal{W}_s^{(k)}\}]$ and $\hat{Z}_k(x):=\ind [\{\monek{k}(x)\in \mathcal{W}_s^{(k)}\}]$. 
%In these terms, the mis-identification of training class may be summarized as $p_0(x):=\P[\hat{Z}_k(x)=1 \mid Z_k(x)=0]$.
%Similarly, we define  $p_1(x):=\P[\hat{Z}_k(x)=1 \mid Z_k(x)=1]$

To mitigate bias, in Phase II we introduce an ensemble of $n_k$ classifiers (for each fixed $\monek{k}$ from Phase I). At the fine-grained level, each $\left(\mtwoki{k}{i}\right)_{i=1}^{n_k}$ is trained to learn the training category $Z_k(x)$, with a 0-1 valued output $\hat{Z}_{k,i}(x)$. The $\mtwoki{k}{i}$-s are typically trained using a (random) subset of the training data $\mathcal{W}_s$ (say, around 80\%). So their 0-1 valued outputs will be different and will embody their inherent randomness. In an unbiased scenario, a natural way to aggregate the $\hat{Z}_{k,i}(x)$ would be to take a majority vote among the $\mtwoki{k}{i}$-s in order to determine the training status seen/unseen. But in view of the inherent bias of $Z_k$ towards being 1, we compute declare the training status as ``seen'' only  if the  total output $\left(\sum_{i=1}^{n_k}\hat{Z}_{k,i}(x)\right)$ crosses a threshold \texttt{thres} (to be thought of as significantly higher than $n_k/2$). In practice, the choice of \texttt{thres} can reasonably be carried out via cross validation.

% Via a Monte Carlo approach, using training data points from $\mathcal{W}_s \setminus \mathcal{W}_s^{(k)}$ and $\mathcal{W}_s^{(k)}$, it obtains an estimate $\hat{p}_0$ and $\hat{p}_1$ for $p_0$ and $p_1$ respectively.  
{\bf Low Dimensional Approximation.}
For practical purposes, we can additionally modify the the scheme outlined above by using a low dimensional projection $\monek{k}(x)$, via a standard dimension reduction methodology, such as {\it Principal Component Analysis} (PCA), and use this as an input to the Phase II classifiers $\left(\Pi[\mtwoki{k}{i}]\right)_{i=1}^{n_k}$. The ability to operate on a low-dimensional problem makes each $\mtwoki{k}{i}$ to be computationally light, which entails that we are able to deploy a very large number $n_k$ of such classifiers leading to improved accuracy of the Monte Carlo method for estimation of training status. The trade-off is the approximation incurred due to working with the low-dimensional representation $\Pi[\monek{k}(x)]$ instead of the full word embedding $\monek{k}(x)$.

% \begin{equation}
% z_k(x) = \Pi\!\left(\monek{k}(x)\right)
% \end{equation}
%notations everywhere
%M_{[1]}^{(k)}

%$M_{[2],i}^{(k)}$ % drop the dimension

%where $z \in \mathbb{R}^{d_\mathrm{eff}}$ is obtained using some standard dimension reduction methodology, such as PCA.

% For each $k \in \{1,\ldots,L\}$ from Phase I, the ensemble of $n_k$ Phase II classifiers $\bigl[\mtwoki{k}{i}\bigr]_{i=1}^{n_k}$ estimates the probability of 

% \begin{equation}
% h_k(z_k(x)) = \mathbb{P}(w(x) \in \mathcal{W}_s^{(k)} \mid z_k(x))
% \end{equation}
% \textcolor{red}{Introduce some noiseless model for this probability to make sense}
% Multiple such classifiers are trained to form an ensemble for robust prediction.

\textbf{Inference.}
Given an input word image $x$, we will compute a prediction for the word $w(x)$ for each $1\le k \le L$, and subsequently aggregate these predictions. For a fixed $k$, the embedding model $\monek{k}$ in Phase I produces $
v_k(x) := \monek{k}(x)$. The $\left(\mtwoki{k}{i}\right)_{i=1}^{n_k}$ ensemble predicts whether the embedding corresponds to a seen or unseen class (for that Phase I model $\monek{k}$. We denote this predicted training status as $\hat{T}^{(k)}$, which can take two values -- seen/unseen. 
If $\hat{T}^{(k)}=\text{seen}$ (resp., unseen), then the predicted word is the ``best approximation'' to $\monek{k}(x)$ from the set $\mathcal{W}_s^{(k)}$ (resp. from the set $\mathcal{W} \setminus \mathcal{W}_s^{(k)}$).

The ``best approximation'' above can be computed via nearest neighbour matching within the relevant subset of the vocabulary. For the seen class $\mathcal{W}_s^{(k)}$. This can optionally be substituted by a random forest or logistic regression based classifier trained on the corresponding data.

Predictions for $1\le k \le L$ are aggregated (via most frequent prediction, with ties being broken via nearest neighbour matching)  to obtain the final word prediction.

\section{The Algorithmic Pipeline} % TODO change name of section

\begin{figure}
\centering
\begin{tikzpicture}[
    >=Stealth,
    node distance=1.0cm,
    every node/.style={font=\small},
    algo/.style={
        draw=orange!50,
        fill=yellow!30,
        rounded corners,
        thick,
        align=center
    },
    optalgo/.style={
        draw=green!50!black,
        fill=green!30,
        rounded corners,
        thick,
        align=center
    },
    %{draw, rounded corners, thick, fill=green!20, minimum width=2.5cm, minimum height=1cm},
    circ/.style={
        draw=red!50!black,
        fill=red!10,
        circle,
        minimum size=1cm,
        align=center
    },
    label/.style={font=\footnotesize, align=center}
]

% Nodes
\node[circ] (D1) {Data};
\node[algo, right=0.3cm of D1] (A1) {Algorithm \ref{alg1}};
\node[circ, right=0.3cm of A1] (D2) {$\monek{k},$\\$T,X,$\\$X'$};
\node[algo, right=0.3cm of D2] (A2) {Algorithm \ref{alg2}};
\node[circ, right=0.3cm of A2] (D3) {$\hat{X},\mtwoki{k}{i}$};
\node[algo, right=0.3cm of D3] (A3) {Algorithm \ref{alg3}};
\node[circ, below=1.3cm of D1] (D4) {$X_\text{seen},$\\$\mathrm{RF},$\\$\mathrm{LR}$};
\node[algo, right=0.3cm of D4] (A5) {Algorithm \ref{alg5}};
\node[circ, right=0.3cm of A5] (P1) {Predictions};
\node[algo, right=0.3cm of P1] (A6) {Algorithm \ref{alg6}};
\node[circ, right=0.3cm of A6] (P2) {Final\\Predictions};
% Connections
\draw[->] (D1) -- (A1);
\draw[->] (A1) -- (D2);
\draw[->] (D2) -- (A2);
\draw[->] (A2) -- (D3);
\draw[->] (D3) -- (A3);
\draw[->] (A3) -- ++(1.6,0) -- ++(0,-1.4) -- ++(-11.45,0) -- (D4);
\draw[->] (D4) -- (A5);
\draw[->] (A5) -- (P1);
\draw[->] (P1) -- (A6);
\draw[->] (A6) -- (P2);

% Labels
\node[label, below=0.2cm of A1] {Split data};
\node[label, below=0.2cm of A2] {Synthetic data for $\mtwoki{k}{-}$};
\node[label, below=0.2cm of A3] {Synthetic data for RF/LR};
\node[label, above=0.1cm of A3] {(Optional)};
\node[label, below=0.2cm of A5] {Prediction per $\monek{k}$};
\node[label, below=0.2cm of A6] {Combining predictions};
\end{tikzpicture}
\caption{Schematic of the algorithmic pipeline for predicting words for IAM handwriting dataset}
\end{figure}
%Seen Unseen Prediction For Each Split
% Seen Unseen Classifier Classifying Embedding-Model Split Samples

% let's try the list here
We train the entire system of predicting $w(x)$ in these stages:
\begin{balditemize}
\item \emph{Training embedding model $\monek{k}$.} Algorithm \ref{alg1} does sample splitting by taking a subset of the 250-by-50 pixel images $x$ with $w(x)\in\mathcal{W}_s^{(k)}$. These splits are used to train $\monek{k}$. For the values of the splitting fractions, we typically take $s\approx0.1$ and $t\approx0.7$. % we typically take around $10\%$ and $70\%$ respectively.
\item \emph{Training ensemble seen/unseen classifier $\mtwoki{k}{-}$.} Algorithm \ref{alg2} (c.f. Section \ref{app:sec:other-alg} in the Appendix) is used to build a dataset to train one ensemble of $\mtwoki{k}{-}$ models for each $k$.  The dataset includes two parts: synthetic data `far' from the seen classes as unseen examples, and remaining training data ($X,X'$) not used for training $\monek{k}$. Each $\mtwoki{k}{i}$ is a is a neural network that is trained with 80\% of the dataset, so that each $\mtwoki{k}{i}$ in the ensemble produces meaningfully different results that can be combined. This ensemble predicts if the $\monek{k}$ has been trained on the word, i.e. $w(x)\in\mathcal{W}_s^{(k)}$, given $\monek{k}(x)$.
%, for mitigating the large difference in accuracy between seen and unseen words.
\item \emph{(Optional) Training word predictor.} Algorithm \ref{alg3} (c.f. Section \ref{app:sec:other-alg} in the Appendix) produces a dataset with two parts: training data with words in $\mathcal{W}_s^{(k)}$ that were not used to train $\monek{k}$, and synthetic data. This is used
for training a random forest (RF) or logistic regression (LR) model that could predict the word among $\mathcal{W}_s^{(k)}$ (RF or LR could replace nearest-neighbour matching).
\item \emph{Word prediction from single $\monek{k}$.} Algorithm \ref{alg5} produces a prediction for each $\monek{k}$ for the images in the test set. We use $\monek{k}(x)$ to get the estimated feature vector, then use $\mtwoki{k}{-}$ to predict whether $w(x)\in\mathcal{W}_s^{(k)}$. If it is predicted that $w(x)\in\mathcal{W}_s^{(k)}$, then Algorithm \ref{alg4} (c.f. Section \ref{app:sec:other-alg} in the Appendix) does nearest-neighbour matching among $\mathcal{W}_s^{(k)}$ (or uses the random forest/logistic regression) to give a prediction for the word. If it If is predicted that $w(x)\in \mathcal{W}\setminus \mathcal{W}_s^{(k)}$, then nearest-neighbour matching among $\mathcal{W}\setminus\mathcal{W}_s^{(k)}$ is used.
\item \emph{Aggregate word prediction from all $\monek{k}$.} Algorithm \ref{alg6} takes the prediction from each $\monek{k}$ to form a final prediction. Algorithm \ref{alg6} does this by taking the most frequent prediction, breaking ties by nearest-neighbour matching of $\monek{k}(x)$.
\end{balditemize}

%previous

To train the $\monek{k}$s in Algorithm \ref{alg1}, we use the same PhoS and PhoC vectors given in the original Pho(SC)Net approach \cite{bhatt2022phoscctc}. In particular, for each of 8407 words, we form a 769-dimensional word embedding vector as follows:
\begin{balditemize}
\item Shapes: The word is split into 1 to 5 parts (for 15 parts in total), each with 11 dimensions representing the shapes of the letters, for a total of 165 dimensions
\item Letters: The word is split into 2 to 5 parts (for 14 parts in total), each with 36 dimensions representing letters appearing in that part, for a total of 504 dimensions
\item Bigrams: The word is split into halves, each with 50 dimensions representing bigrams, for a total of 100 dimensions
\end{balditemize}

\begin{algorithm}[!t]
\caption{Producing dataset splits and training $\monek{k}$}
\label{alg1}
\hspace*{\algorithmicindent} \textbf{Input:} $\dtrain$, the training data; \quad $s, t\in(0,1)$, the data splitting fractions. \\
\hspace*{\algorithmicindent} \textbf{Output:} $\monek{k}, T, X, X'$, where $T$ represents the reduced training data for $\monek{k}$, $X$ will be the `seen' training data for $\mtwoki{k}{i}$, and $X'$ will be the `unseen' training data for $\mtwoki{k}{i}$
\begin{algorithmic}[1]
    \State In $\dtrain$, we randomly select the fraction $s$ of the labels to be denoted as unseen, then filter out the training data with such labels and name the resulting dataframe $X'$. 
    \State We further split the remaining $\dtrain\backslash X'$, into $t:(1-t)$ based on per-class splitting, naming the fraction $t$ split as $T$, and the fraction $1-t$ split as $X$.
    \State Train $\monek{k}$ on $T$.
    %\State Using the trained $\mathsf{M}^{(1)}$, predict the feature vectors of $X \cup X'$. The instances of $X$ are labelled as 1 and the instances of $X'$ is labelled as 0 for seen and unseen training of $\mathsf{M}^{(2)}$.
    %\State We then train 50 different $\mathsf{M}^{(2)}$ on the predicted features on a 80\%-20\% split that is done randomly, where 20\% of the data is used as validation cases to measure the performance of the $\mathsf{M}^{(2)}$. We then append each $\mathsf{M}^{(2)}$ to $\mathcal{M}^{(2)}$ which ultimately consists of all 50 $\mathsf{M}^{(2)}$s.
    \State return $\monek{k},T,X,X'$.
\end{algorithmic}
\end{algorithm}

\begin{algorithm}[!t]
\caption{Predicting classes for each $\monek{k}$}
\label{alg5}
\hspace*{\algorithmicindent} \textbf{Input:}  $\mathfrak{d}$ a test datapoint, $\monek{k},\mtwoki{k}{-},\mathcal{W}_s^{(k)},\mathcal{W}_u^{(k)}$, \texttt{thres} a threshold \\
\hspace*{\algorithmicindent} \textbf{Output:} \texttt{Predictions}[$k$] and $F[k]$, prediction for  word class and feature vector (resp.) for $\mathfrak{d}$ from $\monek{k}$; 
\begin{algorithmic}[1]
%    \For {each $\mathfrak{d}[i]$ of $\mathfrak{d}$}
        \State $F[k]\gets\monek{k}(\mathfrak{d})$ \Comment{Predicted feature vector}
        \State \texttt{count} $\gets$ number of $\mtwoki{k}{-}$ that predict $F$ corresponds to a `seen' class (i.e., in $\mathcal{W}_s^{(k)}$)
        %\State count $\gets\sum_i\mtwoki{k}{i}\circ\Pi(F)$ \Comment{$\mtwoki{k}{i}$ generally predicts from a projection of $F$}
        \If{\texttt{count} $\geq$ \texttt{thres}}
            \State Use Algorithm \ref{alg4} to predict class of $F$ among $\mathcal{W}_s^{(k)}$; store class to \texttt{Predictions}[$k$]
        \Else
            \State \texttt{Predictions}$[k]$ $\gets$ nearest distance class from $\mathcal{W}_u^{(k)}$ \Comment{Predicted word class}
        \EndIf 
    %\EndFor\\
    \State return \texttt{Predictions}[$k$], $F[k]$
\end{algorithmic}
\end{algorithm}

\begin{algorithm}[!t]
\caption{Combining predictions}
\label{alg6}
\hspace*{\algorithmicindent} \textbf{Input:} $\mathfrak{d}$ a test datapoint, all the $F[-]$, \texttt{Predictions}[$-$] (via Alg. \ref{alg5}) \\
\hspace*{\algorithmicindent} \textbf{Output:} \texttt{Final\_Prediction}, a finalized word prediction for $\mathfrak{d}$
\begin{algorithmic}[1]
%    \For {each $\mathfrak{d}[i]$ of $\mathfrak{d}$}
        \State List the most frequent labels among \texttt{Predictions}[$-$]
        \If {there is a unique most frequent label}
        \State \texttt{Final\_Prediction} $\gets$ the unique most frequent label
        \Else
            \State For each of the most frequent labels $c=$\texttt{Predictions}[$l$], compute $\|e(c) - F[l]\|$ 
            %and $\monek{-}(\dtest[i])$, to get a distance for each $\monek{k}$.
            \State Let $c^* = \mathrm{argmin}_c \|e(c) - F[l]\|$ in the above step  
            \State \texttt{Final\_Prediction} $\gets c^*$
        \EndIf
    %\EndFor
    \State return \texttt{Final\_Prediction}
\end{algorithmic}
\end{algorithm}

\section{Results \& Discussion}
\label{sec:results}
%quick problem setup handwriting data
The IAM handwriting data \cite{Marti2002} consists of $250$ pixel by $50$ pixel images of handwritten words, each which are one of 8407 words. 7898 of the words are given in the training set, while the other 509 words are kept as unseen classes. The task is to determine which of 8407 words is written.

% describe in sentences this table
We store the accuracy on unseen classes $A_u$, accuracy on seen classes $A_s$. We compute the {\it harmonic mean} $h:=\frac{2A_sA_u}{A_s+A_u}$ to measure performance at both prediction tasks. As an alternate metric of effectiveness, we also investigate the {\it balanced accuracy} $A_b := \min(A_u, A_s)$.

%\textcolor{red}{TODO short paragraph for each method + description on performance}

The original Pho(SC)Net approach involved training a model to predict the feature vector for image handwriting samples. As the table shows, there is a bias towards predicting seen classes, with significantly lower unseen class accuracy (0.77) than seen class accuracy (0.93).

The other experiments are improvements over the original Pho(SC)Net approach via the paradigm of using many small models $\mtwoki{k}{-}$ to determine if the feature vector corresponds to a seen class $\mathcal{W}_s^{(k)}$ or an unseen class $\mathcal{W}_u^{(k)}$. Afterwards, a threshold count `\texttt{thres}' (in Algorithm \ref{alg5}) is chosen. If there are more models which report the feature vector as seen, compared to `\texttt{thres}', the feature vector is considered seen, and prediction will only be done among seen classes. `\texttt{thres}' was roughly chosen to maximize $h$, the harmonic mean of the unseen and seen accuracy. %If the threshold is too low, this would increase the seen accuracy and decrease the unseen accuracy. Likewise, if the threshold is too high, this would decrease the seen accuracy and increase the unseen accuracy.

%Otherwise, the feature vector is considered unseen, and prediction will only be done among unseen classes.
\begin{table}[!h]
    \centering
    \caption{Implementation details}
    \label{tab:impl-details}
    \begin{tabular}{lccccc}
        \toprule
        & $L$
        & $n_k$
        & Dim. for $M_{[2]}$
        & \texttt{thres} for $h$
        & \texttt{thres} for $A_b$\\
        \midrule
        $M_{[1]}$ (Original Pho(SC)Net) \cite{bhatt2022phoscctc}
        &1&$-$&$-$&$-$&$-$\\
        (21)$M_{[1]} + 50 M_{[2]}$ (769 dim, NN)
        &21&50&769&49&49\\
        (5)$M_{[1]} + 500 M_{[2]}$ (30 dim, NN) &5&500&30&445&430\\
        (5)$M_{[1]} + 500 M_{[2]}$ (15 dim, NN) &5&500&15&440&400\\
        (5)$M_{[1]} + 500 M_{[2]}$ (15 dim, RF) &5&500&15&425&405\\
        (5)$M_{[1]} + 500 M_{[2]}$ (15 dim, LR) &5&500&15&435&420\\
        %\hspace{1em}(10)$\mathsf{M1}$ only&0.831&0.952&0.887\\
        \bottomrule
    \end{tabular}
\end{table}
\begin{table}[!h]
    \centering
    \caption{Experimental results on handwritten test dataset}
    \label{tab:Test results-phoscnet}
    \begin{tabular}{lcccc}
        \toprule
        & $A_u$ & $A_s$ & $h$ & $A_b$\\
        \midrule
        % \textbf{IAM} & & & & \\
        \hspace{1em}$M_{[1]}$ (Original Pho(SC)Net) \cite{bhatt2022phoscctc} &0.77  &\textbf{0.93} &0.84 & $-$\\
        %\hspace{1em}(21)$M_{[1]} + 50 M_{[2]}$(769 dimensions, NN) &\textbf{0.907} &0.906 &\textbf{0.906} & \textbf{0.906}\\
        \hspace{1em}(21)$M_{[1]} + 50 M_{[2]}$ (769 dim, NN) &\textbf{0.931} &0.903 &\textbf{0.917} & \textbf{0.903}\\
        \hspace{1em}(5)$M_{[1]} + 500 M_{[2]}$ (30 dim, NN) &0.916&0.857&0.886&0.877
        \\
        \hspace{1em}(5)$M_{[1]} + 500 M_{[2]}$ (15 dim, NN) &0.905&0.850&0.877&0.864
        \\
        \hspace{1em}(5)$M_{[1]} + 500 M_{[2]}$ (15 dim, RF) &0.905&0.847&0.875&0.867
        \\
        \hspace{1em}(5)$M_{[1]} + 500 M_{[2]}$ (15 dim, LR) &0.900&0.829&0.863&0.847
        \\
        %\hspace{1em}(10)$\mathsf{M1}$ only&0.831&0.952&0.887\\
        \bottomrule
    \end{tabular}
\end{table}

We consider class selection methods: nearest-neighbour (NN), random forest (RF) and logistic regression (LR).

(21)$M_{[1]} + 50 M_{[2]}$ (769 dim, NN) is trained without dimension reduction, and achieves the best performance, with the highest $h=0.917$. Each $\mtwoki{k}{i}$ has 10 ReLU hidden layers of 256 nodes.

%The other methods involve dimension reduction to 15 dimensions. By reducing to 15 dimensions, more seen-unseen classifiers could be trained cheaply.

\begin{figure}[!h]
    \centering
    % TODO increase axis label sizes
    \includegraphics[width=0.4\linewidth]{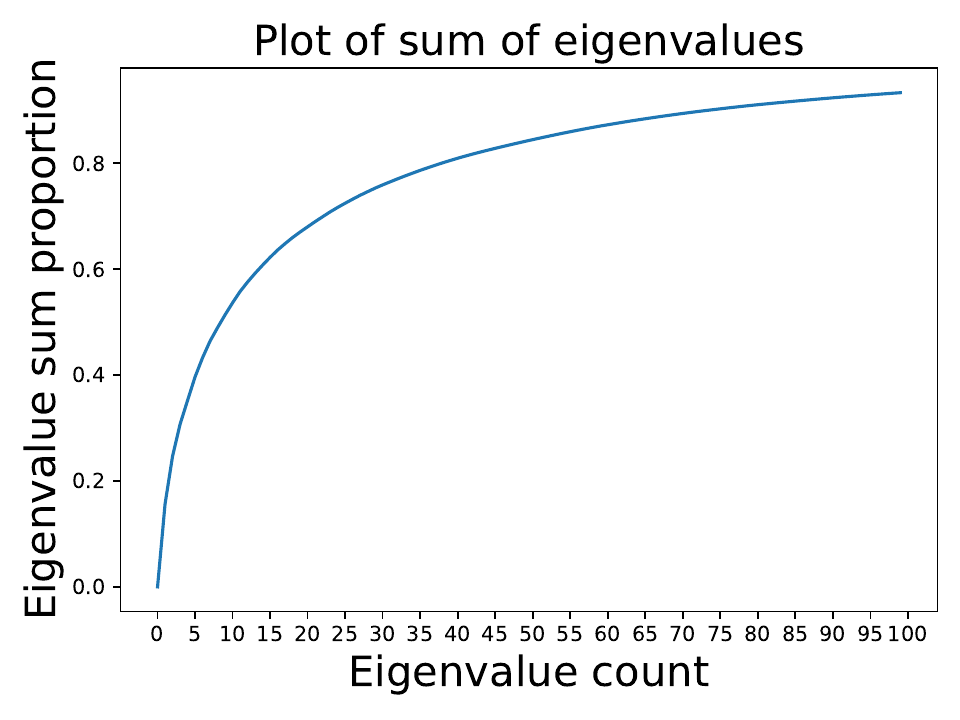}
    \includegraphics[width=0.4\linewidth]{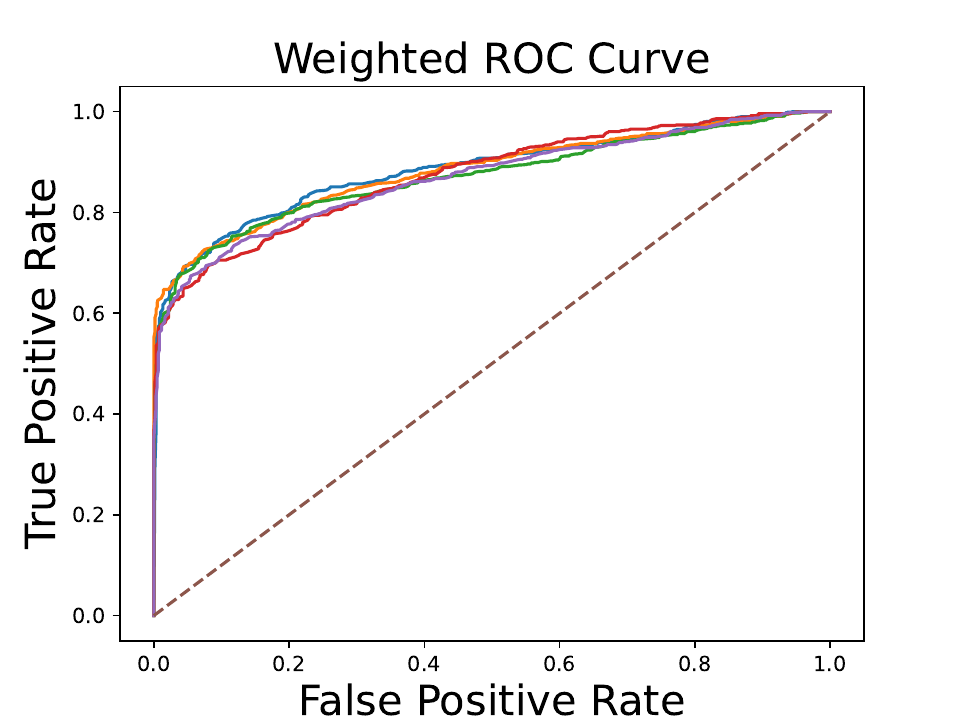}
    \caption{Plot of cumulative sum of eigenvalues, ROCs for ensemble $\mtwoki{k}{-}$ model}
    %15,0.6224002429095676
    %30,0.7592010767237886
    \label{fig:eigenplot}
\end{figure}

We can consider the covariance matrix of the class embedding vectors. We see that the top 15 eigenvalues account for 0.6224 of the sum of all eigenvalues while the top 30 eigenvalues account for 0.7592 of the sum of all eigenvalues. Figure \ref{fig:eigenplot} shows the the sum of largest eigenvalues of the covariance matrix, as a fraction the sum of all the eigenvalues of the covariance matrix.

The ROCs for the ensemble $\mtwoki{k}{-}$ model (for each $k$) are computed by weighting the test set, to give equal total weight to the tests where $w(x)\in\mathcal{W}_s$, and the tests where $w(x)\in\mathcal{W}_u$, of which there are a significantly different number of both tests. The ensembles used here were of 500 30-dimensional $\mtwoki{k}{i}$. We can see the 5 different ensembles have very similar curves.
%\textcolor{red}{TODO Weighted ROC, + explain weighting (datasets different size)}

%Details about the other methods
For both (5)$M_{[1]} + 500 M_{[2]}$ (15 dim, NN) and (5)$M_{[1]} + 500 M_{[2]}$ (30 dim, NN), each $\mtwoki{k}{i}$ has 10 ReLU hidden layers of 32 nodes.

(5)$M_{[1]} + 500 M_{[2]}$ (15 dim, RF). It is the same as (5)$M_{[1]} + 500 M_{[2]}$ (15 dim, NN), but we add a RF, trained on 15-dimensional representations with scikit-learn with a depth of 10. The seen RF turns out to be 5GB per M1. One limitation is that the computational cost is intensive in the height of the tree.

(5)$M_{[1]} + 500 M_{[2]}$ (15 dim, LR). It is the same as (5)$M_{[1]} + 500 M_{[2]}$ (15 dim, NN), but we add a LR, trained on 15-dimensional representations with scikit-learn with 1000 iterations. One limitation is numerical stability.

% \textcolor{red}{TODO Conclusions about the table in one paragraph}
 All the variations of our proposed methodology provide a {\it significant improvement} over the Pho(SC)Net (c.f. Table~\ref{tab:Test results-phoscnet}). In particular, for the crucial problem of  unseen class detection, we are able achieve an accuracy improvement of 0.931 (from 0.77). The overall GZSL accuracy ($h$) also improves 0.917 from 0.84. While
 the (21)$M_{[1]} + 50 M_{[2]}$ ensemble approach is expectedly the most accurate ($h=0.917$), the other approaches involving dimension reduction are also much more accurate than the original Pho(SC)Net. Thus, low-dimensional ensemble approaches provide a computationally lighter alternative to reduce bias for practical applications in various GZSL problems.

\textbf{Note:} All the data analyses reported in this section were conducted on a shared compute cluster with 1 NVIDIA A40 (48GB), 36 cores Intel 9452Y CPUs and 256GB RAM. The most expensive computation, (21)$M_{[1]} + 50 M_{[2]}$ (769 dimensions, NN) took approximately a week of compute, whereas the other experiments can be run in 3 days, for a total of roughly 3 weeks of compute.

%hardware, i.e. HPC

% stopping rules/runtime?
% defaults from scikit learn

\section{Theoretical Underpinnings of Ensembling for Bias Reduction}
In this section, we theoretically justify the idea of using an ensembling strategy for reducing the bias towards seen classes by considering a simplified model which is amenable to mathematical analysis. Suppose we have a dataset consisting of pairs $(x_i, w_i)$, where each $w_i \in \mathcal{W}_s$. Suppose we have $L$ binary predictors $\ensclass{k}$, $1\le k\le L$, where $\psi_k$ is trained on a subset of the available data by restricting to certain class labels $\mathcal{W}^{(k)}_s \subset \mathcal{W}_s$. Given an new observation $(x, w)$, $\ensclass{k}$ is supposed to identify whether the true class label $w$ of $x$ is from among the classes it has seen during its training. As such it gives binary outputs `seen' or `unseen'. We shall 
assume that $\mathcal{W}_s = \cup_{k = 1}^L \mathcal{W}_s^{(k)}$. Then $\mathcal{W}_u=\mathcal{W}\setminus\mathcal{W}_s$ are the common unseen classes. We note here that by sample splitting, one may ensure that the predictors $\psi_k$ are independent (albeit at the loss of some efficiency). In the supplementary material, we analyse the case where the $\psi_k$'s are only assumed to be exchangeable.

A simple example of such binary predictors would be as follows. For each $k$, we fit a Gaussian mixture model to the corresponding training data. Denoting by $\hat{\phi}_k$ the fitted the probability density function, we declare $\ensclass{k}(x)=\text{`unseen'}$ if $\hat{\phi}_k$ has a `low' value at $x$.

Let us introduce shorthands for the `Type 1' and `Type 2' errors of the binary predictors:
\[
    \alpha_k := \mathbb{P}(\ensclass{k}(x)=\text{`unseen'} \mid w\in\mathcal{W}_s^{(k)}), \qquad
    \beta_k := \mathbb{P}(\ensclass{k}(x)=\text{`seen'} \mid w\not\in\mathcal{W}_s^{(k)}).
\]
We shall assume that $\alpha_k +\beta_k \le 1$. Indeed, a random predictor has $\alpha_k+\beta_k=1$, so we are assuming the predictor $\psi_k$ to be no worse than random. %Indeed, if $\alpha_k+\beta_k > 1$, then by flipping the predictions of $\psi_k$, one gets a predictor whose Type 1 and Type 2 errors add up to $< 1$.

Now let $E_m$ be the ensemble predictor that outputs `seen' if and only if at least $m$ of the classifiers $\ensclass{k}$ output `seen'. We shall show that for appropriate choices of $m$, $E_m$ achieves high accuracy in classifying both `seen' and `unseen' classes.

We shall present our results on the accuracy of $E_m$ under two different \emph{class-allocation} mechanisms.
\begin{balditemize}
    \item [(i)] \,\, \emph{Arbitrary allocation.} Seen classes are assigned to $\mathcal{W}_{s}^{(k)}$ in some arbitrary manner.
    \item [(ii)] \,\, \emph{Random allocation.} Each seen class $w \in \mathcal{W}_s$ is included in $\mathcal{W}_s^{(k)}$ with probability $p > 0$, and this procedure is repeated independently across $k$.
\end{balditemize}
We define a few key quantities to state our results. For $w' \in \mathcal{W}_s$, let $\mathcal{L}(w'):=\{k \mid w' \in\mathcal{W}_s^{(k)}\}$. Let $\gamma_k := 1 - \alpha_k - \beta_k \ge 0$, $\Delta(w') := \sum_{k \in \mathcal{L}(w')} \gamma_k$ and $\Delta := \min_{w' \in \mathcal{W}_s} \Delta(w')$.
\begin{theorem}[Arbitrary allocation]\label{thm:arbit}
Assume that $\gamma_k > 0$ for all $k\in[L]$, so that $\Delta > 0$, Then, for $m=\lceil \mu(1+\delta)\rceil$, where $\mu=\sum_{k\in[L]}\beta_k$ and $\delta=\frac{\Delta}{2\mu+\Delta}$,  we have
\begin{center}
    $\mathbb{P}(E_m(x) = \text{`unseen'} \mid w \in \mathcal{W}_s) \le e^{-\frac{\delta^2 (\mu + \Delta)}{2}},$\\
    $\mathbb{P}(E_m(x) = \text{`seen'} \mid w \in \mathcal{W}_u) \le e^{-\frac{\delta^2 \mu}{2 + \delta}}.$
\end{center}
\end{theorem}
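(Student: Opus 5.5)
The plan is to write the vote count as a sum of independent Bernoulli variables and apply the multiplicative Chernoff bounds on each side of the threshold $m$. Set $S(x) := \sum_{k\in[L]} \ind[\ensclass{k}(x)=\text{`seen'}]$, so that $E_m(x)=\text{`seen'}$ if and only if $S(x)\ge m$. Conditionally on the true label $w$, the indicators are independent Bernoulli variables, using the independence of the $\psi_k$ obtained by sample splitting. Their success probabilities are $1-\alpha_k$ for $k\in\mathcal{L}(w)$ and $\beta_k$ for $k\notin\mathcal{L}(w)$.

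\emph{Unseen case.} If $w\in\mathcal{W}_u$, then $w\notin\mathcal{W}_s^{(k)}$ for every $k$, so $\mathbb{E}[S]=\sum_k\beta_k=\mu$. The upper-tail Chernoff bound $\mathbb{P}(S\ge(1+\delta)\mu)\le e^{-\delta^2\mu/(2+\delta)}$ holds for all $\delta>0$. Since $m=\lceil\mu(1+\delta)\rceil\ge(1+\delta)\mu$, it gives the second inequality directly.

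\emph{Seen case.} If $w\in\mathcal{W}_s$, then
\[
\mathbb{E}[S]=\sum_{k\in\mathcal{L}(w)}(1-\alpha_k)+\sum_{k\notin\mathcal{L}(w)}\beta_k=\mu+\Delta(w)\ge\mu+\Delta=:\mu'.
\]
The conditional law is a sum of independent Bernoullis, so the lower-tail Chernoff bound applies in the form $\mathbb{P}(S\le(1-\eta)\mu_L)\le e^{-\eta^2\mu_L/2}$, valid for any $\mu_L\le\mathbb{E}[S]$. This follows from the standard MGF proof, or from stochastic domination after lowering some success probabilities. We need to check that $m$ lies below $(1-\delta)(\mu+\Delta)$.

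Compute $(1-\delta)(\mu+\Delta)=\frac{2\mu+\Delta-\Delta}{2\mu+\Delta}(\mu+\Delta)=\frac{2\mu(\mu+\Delta)}{2\mu+\Delta}$. Also $(1+\delta)\mu=\frac{2\mu+2\Delta}{2\mu+\Delta}\mu$, which is the same number. So $\delta$ was chosen exactly so that $(1+\delta)\mu=(1-\delta)(\mu+\Delta)$, and the threshold sits at the meeting point of the two tails. Then
\[
\mathbb{P}(E_m=\text{`unseen'}\mid w)=\mathbb{P}(S<m)=\mathbb{P}(S\le m-1).
\]
The main obstacle is the ceiling in $m$. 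Since $m-1<(1+\delta)\mu=(1-\delta)\mu'$, we get $\mathbb{P}(S\le m-1)\le\mathbb{P}(S<(1-\delta)\mu')\le e^{-\delta^2(\mu+\Delta)/2}$. So the ceiling is harmless once we use the strict inequality $S<m$.

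Finally, average over the conditional law of $w$ given $w\in\mathcal{W}_s$. The bound is uniform in $w$ because $\Delta\le\Delta(w)$, so the first inequality follows.

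Two degenerate points need a remark. First, if $\mu=0$ the second bound is trivial, since $e^0=1$. Second, $\gamma_k>0$ guarantees $\Delta>0$ provided every seen class lies in some $\mathcal{W}_s^{(k)}$, which the assumption $\mathcal{W}_s=\cup_k\mathcal{W}_s^{(k)}$ ensures. This also gives $\delta\in(0,1]$, as the lower-tail bound requires.
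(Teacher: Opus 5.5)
Your proof is correct and follows essentially the same route as the paper: write the vote count as a sum of independent Bernoullis with conditional means $\mu$ (unseen) and $\mu+\Delta(w)\ge\mu+\Delta$ (seen), note that $\delta$ is chosen so that $(1+\delta)\mu=(1-\delta)(\mu+\Delta)$, and apply the upper- and lower-tail multiplicative Chernoff bounds. The differences are cosmetic. You apply the lower-tail bound with the lower bound $\mu+\Delta$ on the mean and treat the ceiling via $m-1<(1+\delta)\mu$. The paper instead uses the true mean $\mu_s$, the inequality $(1+\delta)\mu\le(1-\delta)\mu_s$, and integrality of the count.
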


When $\Delta$ is bounded away from zero, and $\mu = \Theta(L)$, we see from Theorem~\ref{thm:arbit} that $E_m$ is able to classify both seen and unseen classes with high accuracy.

\begin{theorem}[Random allocation]\label{thm:random}
Let $\Delta_0 > 0$ be a deterministic threshold.  
For $m = \lceil \mu (1 + \delta)\rceil$, where $\mu = \sum_{k \in [L]} \beta_k$ and $\delta = \frac{\Delta_0}{2\mu + \Delta_0}$, we have
\begin{align*}
\mathbb{P}(E_m(x) = \text{`unseen'} \mid w \in \mathcal{W}_s) &\le \mathbb{P}(\Delta < \Delta_0) +   \mathbb{E}\big[e^{-\delta^2 (\mu + \Delta_0) / 2}\big],\\
\mathbb{P}(E_m(x) = \text{`seen'} \mid w \in \mathcal{W}_u) &\le
\mathbb{P}(\Delta < \Delta_0) +   \mathbb{E}\big[ e^{-\delta^2\mu/(2+\delta)}\big].
\end{align*}
If we further assume that the $\alpha_k$'s and the $\beta_k$'s are statistically independent of the class allocation mechanism, then, with $\eta \in (0, 1)$ and $\Delta_0 := (1 - \eta) p \sum_{k \in [L]} \gamma_k$, we have
\begin{align*}
\mathbb{P}(E_m(x) = \text{`unseen'} \mid w \in \mathcal{W}_s) &\le |\mathcal{W}_s| e^{-\frac{2 \eta^2 p^2 (\sum_{k \in [L]} \gamma_k)^2}{\sum_{k \in [L]} \gamma_k^2}} + e^{-\delta^2 (\mu + \Delta_0) / 2},\\
\mathbb{P}(E_m(x) = \text{`seen'} \mid w \in \mathcal{W}_u) &\le |\mathcal{W}_s| e^{-\frac{2 \eta^2 p^2 (\sum_{k \in [L]} \gamma_k)^2}{\sum_{k \in [L]} \gamma_k^2}} + e^{-\delta^2\mu/(2+\delta)}.
\end{align*}
\end{theorem}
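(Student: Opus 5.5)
The plan is to condition on the random class allocation, which turns the problem into the arbitrary-allocation setting of Theorem~\ref{thm:arbit}. Once the allocation is fixed, the sets $\mathcal{W}_s^{(k)}$, the errors $\alpha_k,\beta_k$, and hence $\mu$ and $\Delta$ are all fixed. The argument behind Theorem~\ref{thm:arbit} only uses the number $\Delta$ as a lower bound on $\Delta(w')$, so it should go through with $\Delta$ replaced by any $\Delta_0\le\Delta$, with $\delta$ and $m$ defined from $\Delta_0$.

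To check this, fix $w\in\mathcal{W}_s$. The count $S=\sum_k \ind[\psi_k(x)=\text{`seen'}]$ is a sum of independent Bernoullis with mean
\[
\sum_{k\in\mathcal{L}(w)}(1-\alpha_k)+\sum_{k\notin\mathcal{L}(w)}\beta_k=\mu+\Delta(w)\ge\mu+\Delta_0 .
\]
Then $m=\lceil\mu(1+\delta)\rceil$ is at most $(1-\delta)(\mu+\Delta_0)$ up to rounding. This is where the choice $\delta=\Delta_0/(2\mu+\Delta_0)$ comes from: it solves $\mu(1+\delta)=(1-\delta)(\mu+\Delta_0)$. The lower-tail Chernoff bound then gives $e^{-\delta^2(\mu+\Delta_0)/2}$; the mean being larger than $\mu+\Delta_0$ only helps, by monotonicity of the binomial-type tail. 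For $w\in\mathcal{W}_u$ the mean is exactly $\mu$, and the upper-tail Chernoff bound gives $e^{-\delta^2\mu/(2+\delta)}$. The rounding in the ceiling needs some care, but the same issue already appears in Theorem~\ref{thm:arbit}.

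For the first pair of inequalities, split the probability over the events $\{\Delta<\Delta_0\}$ and $\{\Delta\ge\Delta_0\}$. Bound the first piece by $\mathbb{P}(\Delta<\Delta_0)$. On the second piece, the conditional probability is at most the Chernoff expression, which depends on the allocation only through $\mu$. Taking expectations then gives the stated $\mathbb{E}[\cdot]$ terms.

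For the second pair of inequalities, independence of $\alpha_k,\beta_k$ from the allocation makes $\mu$, $\gamma_k$ and $\Delta_0$ deterministic, so the expectations disappear. It remains to bound $\mathbb{P}(\Delta<\Delta_0)$. For a fixed $w'$,
\[
\Delta(w')=\sum_k\gamma_k\,\ind[w'\in\mathcal{W}_s^{(k)}]
\]
is a sum of independent terms in $[0,\gamma_k]$ with mean $p\sum_k\gamma_k$. Hoeffding's inequality gives
\[
\mathbb{P}\Bigl(\Delta(w')<(1-\eta)p\textstyle\sum_k\gamma_k\Bigr)\le \exp\Bigl(-2\eta^2p^2(\textstyle\sum_k\gamma_k)^2\big/\sum_k\gamma_k^2\Bigr).
\]
A union bound over $w'\in\mathcal{W}_s$ then supplies the factor $|\mathcal{W}_s|$.

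The main obstacle is the conditioning step. We need the $\psi_k(x)$ to be independent given the allocation and the class of $x$, and we need to handle the subtlety that the trained predictors, and hence $\alpha_k,\beta_k$, depend on the allocation in the first part. To make this rigorous, I would treat $\alpha_k,\beta_k$ as allocation-measurable conditional error rates and apply Theorem~\ref{thm:arbit}'s argument conditionally.
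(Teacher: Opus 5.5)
Your proposal is correct and is essentially the paper's proof. You condition on the allocation $\mathcal{A}$, rerun the Chernoff argument of Theorem~\ref{thm:arbit} with $\Delta_0$ in place of $\Delta$ on the $\mathcal{A}$-measurable event $\{\Delta\ge\Delta_0\}$, bound the complementary event by $\mathbb{P}(\Delta<\Delta_0)$, and then control that probability by Hoeffding for each fixed $\Delta(w')$ followed by a union bound over $\mathcal{W}_s$. The rounding you flag is harmless: the count $S$ is integer-valued, so with $a=\mu(1+\delta)$ we have $\{S<\lceil a\rceil\}=\{S<a\}$ and $\{S\ge\lceil a\rceil\}\subseteq\{S\ge a\}$.
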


We see that random allocations, assuming independence of the $\alpha_k$'s and the $\beta_k$'s from the allocation scheme, the ensemble classifier $E_m$ is able to identify both seen and unseen classes with high accuracy, if $\mu$ is large, $\delta$ is bounded away from $0$, and $\frac{(\sum_{k \in [L]} \gamma_k)^2}{\sum_{k \in [L]} \gamma_k^2}$ is large. For instance, if we assume a uniform lower bound $\gamma_k = 1 - \alpha_k -\beta_k \ge \varepsilon > 0$, then $\Delta_0 \ge (1 - \eta) p \varepsilon L$ and $\frac{(\sum_{k = 1}^L \gamma_k)^2}{\sum_{k = 1}^L \gamma_k^2} \ge \varepsilon^2 L$.

We can see Algorithm $\ref{alg6}$ using multiple predictions to form a single prediction by taking the most common-occurring prediction. Each $\monek{k}$ is biased towards predicting words in $\mathcal{W}_s^{(k)}$, affecting the accuracy of words in $\mathcal{W}\setminus\mathcal{W}_s^{(k)}$, especially words not in the training set (in $\mathcal{W}_u$). In our approach, multiple $\monek{k}$ trained on different $\mathcal{W}_s^{(k)}$ give some robustness against an incorrect prediction for words inside $\mathcal{W}_u$.

\section{Concluding Remarks}\label{sec:conc}
In this work, we have devised a novel methodology for rectifying the inherent bias toward seen classes in genralised zero shot learning in the context of handwritten word recognition. Broadly speaking, it is a statistical methodology based on an ensembling strategy that leverages a two stage architecture, with the second stage explicitly focusing on bias mitigation. The proposed method is very general and can in principle be augmented to any classical GZSL learner in a turn-key fashion to implement bias correction. The present work is focused on the handwritten word recognition appplication, and extensions of this approach to other GZSL scenarios is a natural area for follow-up investigation. We provide a foundational theoretical structure to complement our methodology, and extensions of this theory to cover wider application domains and weaker hypothesis setups is another natural direction of research.

% \textcolor{red}{TODO add discussion about weighted nearest neighbours (Soumendu)}

% \textcolor{red}{Add limitations.}

\section{Acknowledgements}
The authors would like to acknowledge that computational work involved in this research work is partially supported by NUS IT's Research Computing group using grant number NUSREC-HPC-00001. SG was supported in part by the NUS Dean’s Chair Associate Professorship E-146-00-0037-01 and the
Singapore MOE grants A-8002014-00-00 and A-8003802-00-00.

\bibliographystyle{plain}
\bibliography{refs}

\appendix
\section{Supporting Algorithms}
\label{app:sec:other-alg}
Algorithm \ref{alg2} is used to generate unseen data for $\mtwoki{k}{-}$. This is run after Algorithm \ref{alg1} and before Algorithm \ref{alg5}. This uses the datasets $X,X'$ (which were not used to train $\monek{k}$) as well as additional synthetic data.
\begin{algorithm}[H] %H for exactly Here
\caption{Synthetically generating unseen data for $\mtwoki{k}{-}$}
\label{alg2}
\hspace*{\algorithmicindent} \textbf{Input:} $\monek{k}, X, X', L, R$ \Comment{$L$ and $R$ are the lower and upper bounds of the extremes respectively}\\
\hspace*{\algorithmicindent} \textbf{Output:} $\hat{X}$, 
consisting of feature vectors derived from $X\cup X'$ and synthetically generated data
\begin{algorithmic}[1]
    \State $N \gets$ length($X \cup X'$) \Comment{number of instances to generate} 
    \State dim $\gets$ \text{ size of feature vector}
    \State Initialize datasets $Z,Z'$
    \For {each $X_i$ of $X$}
        \State $Z[i]\gets\monek{k}(X_i)$ \Comment{$Z$ contains feature vectors of $X$, predicted by $\monek{k}$}
    \EndFor
    \For {each $X'_i$ of $X'$}
        \State $Z'[i]\gets\monek{k}(X'_i)$ \Comment{$Z'$ contains the feature vectors of $X'$, predicted by $\monek{k}$}
    \EndFor
    \State Calculate mean ($\mu$) and standard deviation ($\sigma$) for each coordinate of feature vectors in $Z$.
    \State $\hat{X'} \gets$ array($N$, dim) \newline \Comment{Store the feature vectors of the synthetically generated training unseen, used to train $\mtwoki{k}{-}$.}
    \For {each $X'[i]$ of $X'$}
        \State 95\% of the coordinates of $\hat{X'}[i]$ are given an {\it extreme sample} \newline \Comment from $\leq L$\%  quantile or $\geq R$\% quantile of $N(\mu_j, \sigma_j)$.
        \State The rest 5\% of coordinates of $\hat{X'}[i]$ are given {\it middle samples} \newline \Comment 
        from between $L\%$  quantile and  $R\%$ quantile of $N(\mu_j, \sigma_j)$.
    \EndFor
    \State return a dataframe $\hat{X}$ that consists of $Z, Z'$ and $\hat{X'}$, attaching $1$ to the feature vectors of $Z$, and $0$ to $Z', \hat{X'}$. \Comment{$1$ means it is seen, $0$ means it is unseen.}
\end{algorithmic}
\end{algorithm}
Algorithm \ref{alg3} (optional) is used to generate data for random forest/logistic regression approaches. This is run after Algorithm \ref{alg1} and before Algorithm \ref{alg5}. This uses the dataset $X$ (which was not used to train $\monek{k}$) as well as additional synthetic data.
\begin{algorithm}[H] %H for exactly Here
\caption{Synthetically generating seen data for random forest/logistic regression}
\label{alg3}
\hspace*{\algorithmicindent} \textbf{Input:} $\monek{k}$, $X$.\\
\hspace*{\algorithmicindent} \textbf{Output:} Dataset $X_\text{seen}$ for training random forest
\begin{algorithmic}[1]
\State $m\gets$ maximum number of training samples among $\mathcal{W}_s^{(k)}$ in $X$
\State $D$ stores differences between feature vectors and the corresponding class
\For{each $X_i$ in $X$}
    \State $F\gets\monek{k}(X_i)$
    \State Add feature vector $F$ and class $w(X_i)$ to $X_\text{seen}$
    \State Add the difference $F-e(w(X_i))$ to $D$
\EndFor
\For{each class $c\in\mathcal{W}_s^{(k)}$}
    \While{class has $<m$ samples}
        \State $v\gets$ Uniform sample from $D$
        \State Add $v+e(c)$ and class to $X_\text{seen}$.
    \EndWhile
\EndFor
\State return $X_\text{seen}$
\end{algorithmic}
\end{algorithm}

Algorithm \ref{alg4} is used to apply the random forest/logistic regression (if available) otherwise it applies the nearest-neighbour approach. This is run during Algorithm \ref{alg5}.

\begin{algorithm}[H]
\caption{Predict class of feature vector among seen classes}
\label{alg4}
\hspace*{\algorithmicindent} \textbf{Input:} $F$ a feature vector, $\mathcal{W}_s^{(k)}$, $\mathrm{RF}$ an optional random forest, $\mathrm{LR}$ an optional logistic regression.\\
\hspace*{\algorithmicindent} \textbf{Output:} Class prediction for feature vector
\begin{algorithmic}[1]
    \If{$\mathrm{RF}$ available}
        \State return $\mathrm{RF}(F)$
    \ElsIf{$\mathrm{LR}$ available}
        \State return $\mathrm{LR}(F)$
    \Else
        \State return nearest distance class from $\mathcal{W}_s^{(k)}$
    \EndIf
\end{algorithmic}
\end{algorithm}

\section{Further Theoretical Details}
\label{app:sec:theory}
\subsection{Why Use \texorpdfstring{$E_m$}{Em} over a Consensus Voting?}
To remove the bias toward seen classes, the simplest ensembling method that comes to mind is to declare the class of a new example as `seen` if all the classifiers $\ensclass{k}$ call it `seen'. Let us call this consensus voting rule $E_{\text{all}}(x)$.

Then, assuming independence of the predictors $\ensclass{k}$, it follows that
\begin{align*}
\mathbb{P}(E_{\text{all}}(x)=\text{`unseen'} \mid w\in\mathcal{W}_u)&=1-\prod_{k \in [L]} \beta_k \ge 1 - \beta^L,
\end{align*}
provided a mild upper bound $\beta_k \le \beta < 1$ holds on the Type II errors of the predictors.

Thus the consensus rule $E_{\text{all}}$ is able to remove the bias toward `seen' classes. However, it is too stringent when it comes to correctly identifying `seen' classes. Recall that $\mathcal{L}(w):=\{k \mid w\in\mathcal{W}_s^{(k)}\}$ and set $\ell(w) := \#\mathcal{L}(w)$. In other words, $\ell(w)$ many predictors had $w$ as a seen class in their training sets.
% Then, for $w\in W_s$, we say that $\ell(w):=|\mathcal{L}(w)|$ is the number of $\mathcal{W}_s^{(k)}$ containing $w$.
Then,
\begin{align*}
\mathbb{P}(E_{\text{all}}(x)=\text{`seen'}|w\in\mathcal{W}_s)&=\mathbb{P}(\text{all }\ensclass{k}\text{ call }x\text{ seen})\\
&=\prod_{k:w\in\mathcal{W}_s^{(k)}}(1-\alpha_k)\prod_{k:w\not\in\mathcal{W}_s^{(k)}}\beta_k\\
&\geq(1-\alpha)^{\ell(w)}\beta^{L-\ell(w)},
\end{align*}
where the last inequality holds if $\alpha_k \le \alpha$ and $\beta_k \ge \beta$ for each $k \in [L]$. The takeaway is that the accuracy in identifying `seen' classes can be quite far from $1$.

The above computation suggests the use of a less stringent voting rule such as $E_m$ with an appropriate choice of $m$.
% so we need to use some threshold instead of outputting `unseen' if and only if some $\ensclass{k}$ outputs unseen.

\subsection{Proofs of Various Results}
Recall that
\[
    \Delta(w) := \sum_{k \in \mathcal{L}(w)} \gamma_k, \qquad
    \Delta := \min_{w \in \mathcal{W}_s} \Delta(w).
\]
% Notice that $\Delta$ is a lower bound for the extra expected number of `seen' outputs when $w\in\mathcal{W}_s$. We make use of a concentration inequality to prove a lower bound on accuracy.
\begin{proof}[Proof of Theorem~\ref{thm:arbit}]
Let
\[
    X:= \sum_{k \in [L]} \mathds{1}(\ensclass{k}(x) = \text{`seen'}).
\]
Then $X$ is a sum of independent Bernoulli random variables. Note that
\begin{align*}
\mu_s:=\mathbb{E}[X \mid w\in\mathcal{W}_s] &= \sum_{k\in\mathcal{L}(w)}(1-\alpha_k)+\sum_{k\in[L]\setminus\mathcal{L}(w)}\beta_k\\
&= \sum_{k \in \mathcal{L}(w)} (1 - \alpha_k - \beta_k) + \sum_{k \in [L]} \beta_k \\
&= \Delta(w) + \mu \\
&\ge \Delta+\mu.
\end{align*}
On the other hand,
\begin{align*}
\mu_u:=\mathbb{E}[X \mid w\in\mathcal{W}_u]&=\sum_{k\in[L]}\beta_k=\mu
\end{align*}
For our choice of $\delta$, it may be seen that
\[
    (1+\delta)\mu=\frac{2\mu(\mu+\Delta)}{2\mu+\Delta}\leq\frac{2\mu\mu_s}{2\mu+\Delta}=(1-\delta)\mu_s. 
\]
Applying the Chernoff bounds, we have
\begin{align*}
\mathbb{P}(E_m(x) = \text{`unseen'} \mid w \in \mathcal{W}_s) &= \mathbb{P}(X < m \mid w\in\mathcal{W}_s) \\
&= \mathbb{P}(X < (1+\delta)\mu \mid w\in\mathcal{W}_s) \\
&\le\mathbb{P}[X<(1-\delta)\mu_s|w\in\mathcal{W}_s]\\
&\leq e^{-\delta^2\mu_s/2}.
\end{align*}
Again by the Chernoff bounds,
\begin{align*}
\mathbb{P}(E_m(x) = \text{`seen'} \mid w \not\in \mathcal{W}_s) &= \mathbb{P}(X \ge m \mid w\in\mathcal{W}_u) \\
&=\mathbb{P}(X\geq(1 + \delta) \mu \mid w\in\mathcal{W}_u) \\
&\le e^{-\delta^2\mu/(2+\delta)}.
\end{align*}
% \color{red}
% \begin{align*}
% &\frac{1 - \delta}{1 + \delta} \ge \frac{\mu}{\Delta + \mu} \\
% \iff & (1 - \delta) (\Delta + \mu) \ge (1 + \delta) \mu \\
% \iff & \Delta \ge (\Delta + 2\mu) \delta \\
% \iff & \delta \le \frac{\Delta}{\Delta + 2\mu}.
% \end{align*}
% \color{black}
This completes the proof.
\end{proof}

\begin{proof}[Proof of Theorem~\ref{thm:random}]
We note that under the random allocation model, the indicators $X_{w, k} := \mathds{1}(w \in \mathcal{W}^{(k)}_s)$ are i.i.d. $\mathrm{Ber}(p)$ random variables. Now
\[
    \Delta(w) = \sum_{k \in [L]} \gamma_k X_{w, k},
\]
where $\gamma_k = 1 - \alpha_k - \beta_k$. Note that the $\gamma_k$'s a priori depend on the allocated classes $\mathcal{A} := (\mathcal{W}^{(k)}_s)_{k \in [L]}$. Given an allocation $\mathcal{A}$, we may repeat the argument used in the proof of Theorem~\ref{thm:arbit}. Let $\Delta_0$ denote a deterministic threshold such that $\mathbb{P}(\Delta \le \Delta_0)$ is small. Then, with $\delta = \frac{\Delta_0}{2 \mu + \Delta_0}$ and $m = \lceil \mu (1 + \delta) \rceil$, we have
\begin{align*}
    \mathbb{P}(E_m(x) = \text{`unseen'} \mid w \in \mathcal{W}_s) &= \mathbb{E}\big[\mathbb{P}(E_m(x) = \text{`unseen'} \mid w \in \mathcal{W}_s, \mathcal{A})\big] \\
    &= \mathbb{E}\big[\mathbb{P}(E_m(x) = \text{`unseen'} \mid w \in \mathcal{W}_s, \mathcal{A}) \mathds{1}(\Delta < \Delta_0)\big] \\
    & \qquad +  \mathbb{E}\big[\mathbb{P}(E_m(x) = \text{`unseen'} \mid w \in \mathcal{W}_s, \mathcal{A}) \mathds{1}(\Delta \ge \Delta_0)\big] \\
    &\le \mathbb{P}(\Delta < \Delta_0) + \mathbb{E}\big[e^{-\delta^2 (\mu + \Delta_0) / 2}\big].
\end{align*}
Similarly,
\begin{align*}
    \mathbb{P}(E_m(x) = \text{`unseen'} \mid w \in \mathcal{W}_s) &\le \mathbb{P}(\Delta < \Delta_0) + \mathbb{E}\big[e^{-\delta^2 \mu / (2 + \delta)} \big].
\end{align*}

If we further assume that the $\alpha_k$'s and the $\beta_k$'s do not depend on the class allocations $\mathcal{A}$, then the $\Delta(w)$'s are independent random variables, and hence by Hoeffding's inequality, for any $\eta \in (0, 1)$,
\begin{align*}
    \mathbb{P}\bigg(\Delta(w) \le (1 - \eta) p \sum_{k \in [L]} \gamma_k\bigg) &= \mathbb{P}(\Delta(w) \le (1 - \eta) \mathbb{E}[\Delta(w)]) \\
    &\le \exp\bigg(-\frac{2 \eta^2 p^2 (\sum_{k \in [L]} \gamma_k)^2}{\sum_{k \in [L]} \gamma_k^2}\bigg).
\end{align*}
It follows by a union bound that
\[
    \mathbb{P}\bigg(\Delta \le (1 - \eta) p \sum_{k \in [L]} \gamma_k\bigg) \le |\mathcal{W}_s|\exp\bigg(-\frac{2 \eta^2 p^2 (\sum_{k \in [L]} \gamma_k)^2}{\sum_{k \in [L]} \gamma_k^2}\bigg).
\]
Thus we may set $\Delta_0 := (1 - \eta) p \sum_{k \in [L]} \gamma_k$ and obtain
\small{\begin{align*}
\mathbb{P}(E_m(x) = \text{`unseen'} \mid w \in \mathcal{W}_s) \le |\mathcal{W}_s|\exp\bigg(-\frac{2 \eta^2 p^2 (\sum_{k \in [L]} \gamma_k)^2}{\sum_{k \in [L]} \gamma_k^2}\bigg) + e^{-\delta^2 (\mu + \Delta_0) / 2}.
\end{align*}}
Similarly,
\small{\begin{align*}
\mathbb{P}(E_m(x) = \text{`seen'} \mid w \in \mathcal{W}_s) \le |\mathcal{W}_s|\exp\bigg(-\frac{2 \eta^2 p^2 (\sum_{k \in [L]} \gamma_k)^2}{\sum_{k \in [L]} \gamma_k^2}\bigg) + e^{-\delta^2\mu/(2+\delta)}.
\end{align*}}
This completes the proof.
\end{proof}

% \color{red} $\Delta$ is random below, so the choice of $m$ has to be based on the lower bound. This can be removed.
% \begin{theorem}[Random allocation]\label{thm:random2}
% Suppose $\Delta=\min_{w\in\mathcal{W}_s}\ell(w)-\left(\sum_{k\in\mathcal{L}(w)}\alpha_k+\beta_k\right)\geq\varepsilon\min_w\ell(w)$.

% Then, for each $\delta\in[0,1]$, with probability at least $1-|\mathcal{W}_s|e^{-\delta^2 p L/2}$, we have $\Delta\geq\varepsilon(1-\delta)p L$.

% Setting $\mu=\sum_{k\in[L]}\beta_k$ and $\delta'=\frac{\Delta}{2\mu+\Delta}$ and $m=\mu(1+\delta')$, the ensemble classifier $E_m$ has accuracy $\geq1-e^{-\delta'^2\mu/(2+\delta')}$.
% \end{theorem}

% \begin{proof}
% Applying Chernoff bounds, we note for each $w$ that
% \[\mathbb{P}(\ell(w)\leq(1-\delta)p L)\leq e^{-\delta^2 p L/2}.\]
% Hence, by union bound,
% \[\mathbb{P}(\min_w\ell(w)\leq(1-\delta) p L)\leq|\mathcal{W}_s|e^{-\delta^2 p L/2}.\]
% The remaining part follows from Lemma \ref{ensclass-acc}.
% \end{proof}
% \color{black}

\subsection{A Result under Exchangeability of the Predictors \texorpdfstring{$\psi_k$}{psi\_k}}

% $\alpha_k,\beta_k$ can be different if we use different statistical methods to train different classifiers.

% We see $\Delta$ is a weighted sum of $L$ Bernoulli random variables of mean $p$, with deterministic weights $1-\alpha_i-\beta_i\geq\varepsilon>0$.

One issue with the previous analysis is that we have been assuming independence of the predictors $\psi_k$ (for instance, by sample splitting, which comes at the cost of some statistical efficiency). In practice, the different predictors in the ensemble may be assigned common training samples, rendering them not independent of each other. It turns out that we can prove similar results under the weaker assumption of exchangeability.

%\textcolor{red}{TODO add exchangable lemma cite paper theorem 4 or maybe 5 (alpha and beta have to be the same?) do it in a different one?} % https://arxiv.org/pdf/2404.06457

\begin{theorem}
Fix $x,w(x)$. Suppose $(\alpha_1,\beta_1),\dots(\alpha_L,\beta_L)$ are exchangeable random variables with common mean $(\alpha,\beta)$.

Then, for $m=L\left(\beta+\frac{p}{2}(1-\alpha-\beta)\right)$, the ensemble classifier $E_m$ is such that
\begin{align*}
    \mathbb{P}(E_m(x) = \text{`unseen'} \mid w \in \mathcal{W}_s) &\le \exp\left(-\frac{p^2L(1-\alpha-\beta)^2(L-H_L)}{2(L-1)}\right)\\
    \mathbb{P}(E_m(x) = \text{`seen'} \mid w \in \mathcal{W}_u) &\le \exp\left(-\frac{p^2L(1-\alpha-\beta)^2(L-H_L)}{2(L-1)}\right).
\end{align*}

%=d/(2u+d)-2u delta-d delta
%If $0<m<\min_w\ell(w)$ then the accuracy of the classifier is at least $1-\exp(C_1m)-\exp(C_2(\min_w\ell(w)-m))$.
\end{theorem}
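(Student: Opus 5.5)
We set $X := \sum_{k\in[L]} Y_k$ with $Y_k := \mathds{1}(\ensclass{k}(x)=\text{`seen'})$, exactly as in the proof of Theorem~\ref{thm:arbit}, and work under the random allocation mechanism of Theorem~\ref{thm:random}. The plan is to show three things: that $X$ is a sum of exchangeable $\{0,1\}$-valued variables, that its conditional means under `seen' and `unseen' are separated by $Lp(1-\alpha-\beta)$, and that the threshold $m$ sits exactly at the midpoint. A Hoeffding-type inequality for exchangeable sums then finishes the argument.

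\textbf{Step 1 (exchangeability).} We check that $(Y_1,\dots,Y_L)$ is exchangeable, conditionally on $w\in\mathcal{W}_s$ and on $w\in\mathcal{W}_u$.
\begin{itemize}
\item The allocation indicators $X_{w,k}=\mathds{1}(w\in\mathcal{W}_s^{(k)})$ are i.i.d.\ $\mathrm{Ber}(p)$.
\item The pairs $(\alpha_k,\beta_k)$ are exchangeable.
\item Given these, each $Y_k$ is drawn with success probability $X_{w,k}(1-\alpha_k)+(1-X_{w,k})\beta_k$.
\end{itemize}
Hence the triples $(X_{w,k},\alpha_k,\beta_k)$ are exchangeable, and so are the $Y_k$. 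Here we read the hypothesis as saying that, given the triples, the $Y_k$ are produced by a symmetric mechanism, e.g.\ conditionally independent coin flips.

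\textbf{Step 2 (means).} By the tower property and $\mathbb{E}[\alpha_k]=\alpha$, $\mathbb{E}[\beta_k]=\beta$, we get the two conditional means:
\begin{itemize}
\item for $w\in\mathcal{W}_s$, $\mathbb{E}[X]=L\bigl(p(1-\alpha)+(1-p)\beta\bigr)=L\bigl(\beta+p(1-\alpha-\beta)\bigr)$;
\item for $w\in\mathcal{W}_u$, $\mathbb{E}[X]=L\beta$.
\end{itemize}
This uses that $\alpha_k,\beta_k$ are independent of the allocation, as in the second half of Theorem~\ref{thm:random}. The prescribed $m=L\bigl(\beta+\tfrac p2(1-\alpha-\beta)\bigr)$ is exactly halfway between the two means. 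Both error events are therefore deviations of $X$ from its mean by at least $t:=\tfrac{Lp}{2}(1-\alpha-\beta)$:
\begin{itemize}
\item $\{E_m=\text{`unseen'}\}$ given seen is a lower deviation;
\item $\{E_m=\text{`seen'}\}$ given unseen is an upper deviation.
\end{itemize}

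\textbf{Step 3 (concentration).} We then invoke a Hoeffding inequality for sums of $L$ exchangeable $[0,1]$-valued variables, of the form
\[
\mathbb{P}\bigl(\pm(X-\mathbb{E}X)\ge t\bigr)\le \exp\!\Bigl(-\frac{2t^2(L-H_L)}{L(L-1)}\Bigr),
\]
as in the recent literature on concentration for exchangeable sequences; the harmonic number $H_L$ in the statement points to this form. Substituting $t^2=L^2p^2(1-\alpha-\beta)^2/4$ gives exactly $\exp\bigl(-p^2L(1-\alpha-\beta)^2(L-H_L)/(2(L-1))\bigr)$ for both errors.

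\textbf{Main obstacle.} Step 3 is the delicate part. The quoted inequality must apply to our exchangeable, mixed-randomness $Y_k$ without a further de Finetti-type extendability assumption, and the constant $(L-H_L)/(L-1)$ must be precisely the one obtained. If that precise form is unavailable, the fallback is a de Finetti mixture argument for extendable sequences. That route, however, gives a bound depending on the variance of the mixing measure, so a finite-$L$ statement matching the constant would be the preferred route.
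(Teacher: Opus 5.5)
Your plan is the paper's proof step for step. You use the same count $X$ and the same two conditional means; you are right that the seen-case mean needs the allocation to be independent of the $(\alpha_k,\beta_k)$, which the paper uses silently. You use the same midpoint threshold and the same exchangeable Hoeffding bound. The paper quotes \cite[Theorem 4]{Barber2024HoeffdingAB} as $\mathbb{P}\{\sum_i v_i(X_i-\mathbb{E}X_i)\ge\|v\|_2\sqrt{2\frac{L-1}{L-H_L}\log(1/\delta)}\}\le\delta$ and applies it with $v=(\pm\tfrac12,\dots,\pm\tfrac12)$ to $X_i=2Y_i-1$; that is exactly your Step~3 after rescaling. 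The obstacle you flag in Step~3 is not a technicality: the inequality is false. For $\{0,1\}$-valued exchangeable $Y_k$, the law of $X=\sum_k Y_k$ can be \emph{any} distribution on $\{0,\dots,L\}$ (draw $X$, then place the ones uniformly at random). So exchangeability alone gives no concentration of $X$ around $\mathbb{E}X$. For example, $Y_1=\dots=Y_L=\xi\sim\mathrm{Ber}(1/2)$ gives $\mathbb{P}(X-\mathbb{E}X\ge L/2)=1/2$, while your bound with $t=L/2$ is $\exp(-L(L-H_L)/(2(L-1)))$. Inequalities valid for all exchangeable vectors can only control the fluctuation due to the random ordering, e.g.\ $\sum_i v_iY_i$ about $\bar v\sum_i Y_i$ with $\bar v=\frac1L\sum_i v_i$. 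For constant weights that fluctuation is identically zero. So matching the paper does not close the gap; the paper's proof has the same defect.

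In fact the statement itself fails, even under your Step~1 reading of conditionally independent coin flips given the triples. Take $w\in\mathcal{W}_u$, $p=1/2$, and $\alpha_k=0$, $\beta_k=B$ for every $k$, where $B\in\{0,0.6\}$ with probability $1/2$ each. Let the outputs be conditionally independent given $B$. The pairs $(\alpha_k,\beta_k)$ are exchangeable with common mean $(0,0.3)$, $\alpha_k+\beta_k\le 1$, and $m=0.475L$. On $\{B=0.6\}$ we have $X\sim\mathrm{Bin}(L,0.6)$, so $\mathbb{P}(E_m(x)=\text{`seen'}\mid w\in\mathcal{W}_u)\to 1/2$, whereas the claimed bound tends to $0$. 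For $L=100$ the bound is about $0.003$ against a true value of about $0.497$.

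Your de Finetti fallback is the correct diagnosis. Conditionally on the $(\alpha_k,\beta_k)$ and the allocation, ordinary Hoeffding concentrates $X$ around the \emph{random} mean $\sum_k[X_{w,k}(1-\alpha_k)+(1-X_{w,k})\beta_k]$. To reach the stated threshold you need an extra hypothesis making $\bar\alpha_L=\frac1L\sum_k\alpha_k$ and $\bar\beta_L=\frac1L\sum_k\beta_k$ close to $\alpha,\beta$. The provable bound then carries an additional term such as $\mathbb{P}(|\bar\alpha_L-\alpha|+|\bar\beta_L-\beta|>\varepsilon)$, with margin $\frac p2(1-\alpha-\beta)-O(\varepsilon)$, and that term cannot be removed. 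When the $(\alpha_k,\beta_k)$ are deterministic and the outputs independent, plain Hoeffding already gives $\exp(-p^2L(1-\alpha-\beta)^2/2)$, which is stronger than the stated bound.
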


Here, $(\alpha_1,\beta_1),\dots(\alpha_L,\beta_L)$ being exchangeable means that for any permutation $\sigma$ of $[L]$, we have
\[(\alpha_1,\dots,\alpha_L,\beta_1,\dots,\beta_L) \stackrel{d}{=} (\alpha_{\sigma(1)},\dots,\alpha_{\sigma(L)},\beta_{\sigma(1)},\dots,\beta_{\sigma(L)}).\]

\begin{proof}
As before, let
\[
    X = \sum_{k \in [L]} \mathds{1}(\ensclass{k}(x) = \text{`seen'}).
\]
% hmm is it even clear
We analyze this variable in the cases $w\in\mathcal{W}_s$ and $w\in\mathcal{W}_u$.

From \cite[Theorem 4]{Barber2024HoeffdingAB}, let $v\in\mathbb{R}^L$ and $X_1,\dots,X_L\in[-1,1]$ be exchangeable, where $L\geq2$. Then,

\[\mathbb{P}\left\{\sum_{i=1}^Lv_i(X_i-\mathbb{E}[X_i])\geq\|v\|_2\sqrt{2\left(\frac{L-1}{L-H_L}\right)\log(1/\delta)}\right\}\leq\delta\]
where $H_L=\sum_{i=1}^L\frac{1}{i}\sim O(\log L)$.

We apply this in the case where $X_i$ is a Rademacher random variable as follows:
\[X_i=\begin{cases}
1&\psi_k\text{ calls }x\text{ seen}\\
-1&\text{otherwise}
\end{cases}\]
so we get:
\begin{align*}
\mathbb{E}[X_i]&=\begin{cases}2p(1-\alpha)+2(1-p)\beta-1&w\in\mathcal{W}_s\\
2\beta-1&w\in\mathcal{W}_u
\end{cases}\\
X-\mathbb{E}[X]&=\frac{1}{2}\sum_{i=1}^LX_i-\mathbb{E}[X_i].
\end{align*}
Setting $v=(1/2,\dots,1/2)$, we obtain
\[\mathbb{P}\left\{X-\mathbb{E}[X]\geq\frac{1}{2}\sqrt{2L\left(\frac{L-1}{L-H_L}\right)\log(1/\delta)}\right\}\leq\delta.\]
Setting $v=(-1/2,\dots,-1/2)$, we obtain
\[\mathbb{P}\left\{X-\mathbb{E}[X]\leq-\frac{1}{2}\sqrt{2L\left(\frac{L-1}{L-H_L}\right)\log(1/\delta)}\right\}\leq\delta.\]

Hence, choosing our threshold $m=L\left(\beta+\frac{p}{2}(1-\alpha-\beta)\right)$, we can get
\[
    \frac{1}{2}\sqrt{2L\left(\frac{L-1}{L-H_L}\right)\log(1/\delta)} = \frac{p L}{2}(1-\alpha-\beta),
\]
i.e.
\[
    \delta = \exp\left(-\frac{p^2L(1-\alpha-\beta)^2(L-H_L)}{2(L-1)}\right)
\]
which is our false positive rate/false negative rate. We notice $\delta\sim\exp(-cL)$ for some constant $c$.
\end{proof}

We can see that the prediction errors shrink exponentially in $L$, making the ensemble predictor a viable approach.

% Each of the classifiers $\ensclass{k}$ is wrong about $w(x)\in\mathcal{W}_s$ some small amount of the time. Ensembling methods allow us to combine the correct predictions while having robustness against the wrong outputs.

\end{document}